%
%
%
%

\documentclass[letterpaper, 10 pt, conference]{ieeeconf}  

\IEEEoverridecommandlockouts                              

\overrideIEEEmargins                                      
\pdfminorversion=4 

\linespread{1.00}


\usepackage{graphicx}
\usepackage{amsmath}
\usepackage{cite}
\usepackage{mathtools, amssymb}
\usepackage{varioref}
\labelformat{equation}{(#1)}
\usepackage{amsfonts}

\usepackage{color}
\usepackage{enumerate}

\usepackage{mysymbol}


\usepackage[dvipsnames]{xcolor}
\usepackage{tikz,pgfplots}
\usepackage{color} 
\usetikzlibrary{matrix} 
\usetikzlibrary{arrows} 
\usetikzlibrary{calc} 
\usetikzlibrary{shapes}
\pgfplotsset{compat=1.7}

\usetikzlibrary{arrows}
\usetikzlibrary{positioning}
\usetikzlibrary{backgrounds, fit}


\usepackage{amsthm}
\theoremstyle{plain}
\newtheorem{thm}{Theorem}

\theoremstyle{definition}

\newtheorem{remark}{Remark}
\newtheorem{example}{Example}


\renewcommand{\epsilon}{\varepsilon}

\usepackage{algorithm}
\usepackage{algpseudocode}
\usepackage{algorithmicx}


\usepackage{epstopdf}

\title{\LARGE 
\bf
Adaptive Scheduling for Machine Learning Tasks over Networks
}
\author{Konstantinos~Gatsis~
\thanks{
	The author is with the Department of Engineering Science, University of Oxford, Parks Road, Oxford, OX1 3PJ, UK. Email: konstantinos.gatsis@eng.ox.ac.uk.}%
}


\begin{document}

\maketitle
\thispagestyle{empty}
\pagestyle{empty}

\begin{abstract}
A key functionality of emerging connected autonomous systems 
such as smart transportation systems, smart cities, and the industrial 
Internet-of-Things, is the ability to process and learn from data 
collected at different physical locations. This is increasingly 
attracting attention under the terms of distributed learning and 
federated learning. However, in this setup data transfer 
takes place over communication resources that are shared 
among many users and tasks or subject to capacity constraints. This paper examines algorithms for 
efficiently allocating resources to linear regression tasks by exploiting the informativeness of the data. The algorithms developed enable adaptive scheduling of learning 
tasks with reliable performance guarantees.
\end{abstract}

\section{Introduction}

Conventional machine learning applications require data to be collected at a centralized location to be trained in a centralized manner, for example, running stochastic gradient descent from sampling the pool of data. However, the emergence of new cyber-physical architectures that are distributed requires rethinking this approach to enable learning across locations, for example when data are distributed or agents need to run tasks in different physical locations -- see, for example, Fig.~\ref{fig:architecture}. Applications domains of interest include, for example the Industrial Internet-of-Things, Smart Cities, and Smart Transportation Systems. 

One significant instance of this is the architecture of federated learning, a term used by Google \cite{konevcny2016federated, bonawitz2019towards}. The primary role of this architecture is to enable multiple users to jointly solve a machine learning problem over a communication network from data collected from the users. A major challenge is that in many modern machine learning applications data can be very high-dimensional, making their communication costly and inefficient. To alleviate this communication bottleneck, various approaches are being explored towards more communication-efficient machine learning. One direction is based on communicating the machine learning model parameters as they are being trained, such as the weights of a Deep Neural Network, instead of the data itseldf, or communicate the gradients of the optimization objective with respect to the parameters. In cases of deep learning models where even these parameters can be high dimensional neural network weights, sparsification and quantization of the weights or the gradients is also introduced to limit the communication cost~\cite{konevcny2016federated, aji2017sparse, sattler2019sparse, lin2020achieving}. Lazy updates are introduced in \cite{chen2018lag}, combinations of non-periodic updates and quantization is explored in \cite{reisizadeh2019fedpaq}, and considerations of federated multi-task learning is introduced in \cite{smith2017federated}.

Furthermore, when distributed learning is taking place over a wireless network, there is an interest in allocating the available network resources efficiently among the users holding the data~\cite{gunduz2019machine}. This has received increased interest in the wireless networking community leading to different directions including allocation of resources such as power~\cite{chen2019joint,dinh2019federated} or rates~\cite{ren2019accelerating,ahn2020cooperative, chang2020communication}. Often in such works the resource allocations are fixed and then machine learning performance properties such as convergence are analyzed. The problem of scheduling gradient updates over multiple access channels has also received initial consideration, for example comparing time-based approaches with approaches based on channel conditions~\cite{yang2019scheduling}, or including gradient information~\cite{amiri2020update, chen2020convergence}, or combinations with analog communication~\cite{amiri2020federated, sery2019analog}.
\begin{figure}[t!]
	\resizebox{\columnwidth}{!}{%
		\input{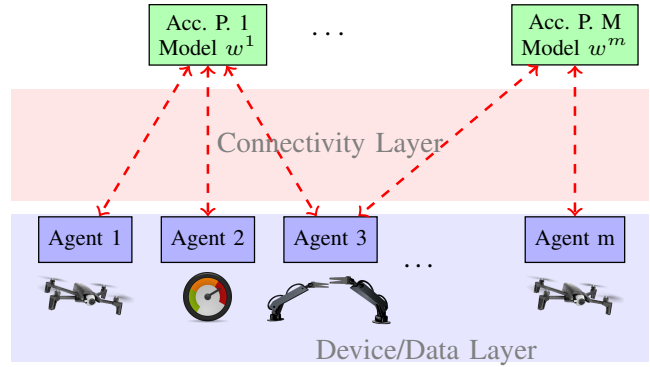}
	}
	\caption{{Architecture for solving machine learning tasks over communication networks of agents. Agents are collecting data and are communicating with access points/servers. Communication efficiency is addressed in this work by assessing the informativeness of the data.}}
	\label{fig:architecture}
\end{figure}

The purpose of this work is to introduce a new communication-efficient learning approach by
{posing a fundamental question: How informative are the collected data for the purpose of solving a machine learning problem? And how can this be used to communicate with efficiency over a network?}
Our approach hinges on the fact that when  model parameters need to be updated from noisy data, then not all updates are equally informative. As a result, performing updates selectively may be beneficial, and we can evaluate the informativeness of the data by estimating the gain in machine learning performance. Building upon this intuition we propose scheduling algorithms that aim to update the machine learning task whose data carry the most informative information, i.e., bring the most gain. By prioritizing updates  with more relevant information we aim to more efficiently use the communication resources.
This is illustrated when communication capacity is limited, and it is undesirable to have a node sending updates all the time, or when {multiple machine learning tasks} are solved over a network and not all tasks can be updated from distributed data at the same time.

The approach is developed for the problem of solving linear regression tasks in order to minimize the expected square prediction errors for the tasks. The problem is formulated in Section~\ref{sec:setup} and the scheduling problems to deal with communication capacity constraints are introduced.
In Section~\ref{sec:approach} we present our scheduling algorithms, which attempt to prioritize updates whose data carries the most information, i.e., that would lead to the highest performance gain (minimize the prediction loss). 
The approach is theoretically analyzed and  guarantees are provided about convergence and the tradeoff with required communication resources.
Furthermore, as our method is ideally developed when the loss function is completely known, i.e., the data distribution is known, significant effort is placed on an approximate scheduling scheme that uses the currently available data to estimate how informative the current update will be.

Our technical methodology borrows ideas from the problem of scheduling multiple control tasks over a shared resource, where approaches such as opportunistic scheduling over wireless links \cite{EisenEtal19a,GatsisEtal15} and age-of-information or value-of-information \cite{mamduhi2017error, ayan2019age, soleymani2016optimal} are proposed. Our methodology is also related to approaches in event-triggered learning that try to update only if necessary~\cite{solowjow2018event, zhao2020event}.
Moreover, the problem of solving multiple tasks concurrently is studied, unlike previous approaches that are focused on solving a single machine learning task. This creates both a higher communication load, but also an opportunity to differentiate among tasks based on their performance gain, as described above. 

In numerical evaluations in Section~\ref{sec:numerical} we validate the performance of our algorithms compared to other scheduling benchmarks and evaluate the performance gains at different regimes, when the covariance of the data changes, or when the number of available data is small or large. 
Our evaluations show significant performance improvements compared to other approaches in the literature that treat the magnitude of the gradients as a measure of the informativeness of the current update. We conclude with a discussion on future work.

\section{Problem Setup}\label{sec:setup}

{
	
	The architecture examined in this paper, shown in Fig.~\ref{fig:architecture}, involves multiple access-points/servers interested in building data-driven models by solving machine learning tasks on data that are collected by multiple agents. 
	In general, both multiple machine learning tasks may need to be solved, and multiple agents are involved. 
	We consider machine learning tasks indexed by $j=1, \ldots, m$. For each task we are interested in finding a vector of weights (parameters) $w^j$ of appropriate dimensions. To assess the performance of each task we also have a performance metric (cost) to be minimized denoted by $J^j(w^j)$ for each task.
	
	The aim will be to achieve this with communication efficiency. 
	This is for example the case when an agent should not communicate all the time over a communication network to update the vector of parameters at the access point/server. Or when due to capacity constraints not all agents can communicate at the same time. This leads to problems of scheduling the data updates.
Below we give a more precise description of the learning tasks in our framework, which correspond to linear regression tasks, and the scheduling problems considered.

}

\subsection{Model for each learning task}\label{sec:task}

We consider each machine learning task to be a linear regression problem~\cite[Ch. 9]{shalev2014understanding}. 
We are interested in finding a vector of weights $w$ that explains the relationship between random variables $(x,y)\in \reals^n \times \reals$, i.e., $y \approx x^T w$. The random variables $(x,y)$ follow in general a joint distribution denoted by $\mu$. The desired choice for the weights is the one that minimizes the expected square prediction error, i.e.,
\begin{equation}\label{eq:regression_problem}
\min_{w} J(w) = \frac{1}{2} \mathbb{E}_{(x,y)\sim \mu}(y - x^T w)^2 
\end{equation}
where the expectation is with respect to the data distribution $\mu$ -- in the sequel we drop this notation when it is implied that expectation is with respect to this distribution.

The optimal solution $w^*$ is given as the solution to the linear equations 
\begin{equation}
	\mathbb{E}xx^T w^* - \mathbb{E}xy = 0. 
\end{equation}
Towards finding an optimal set of weights, we would like to employ a gradient descent algorithm. Starting from some initial set of weights $w_0$ we would like to update the weights according to
\begin{equation}
w_{k+1} = w_k - \epsilon \nabla J(w_k)  
\end{equation}
where $\nabla J(w_k) = \mathbb{E}xx^T w_k - \mathbb{E}xy$, and $\epsilon >0$ is a small positive stepsize. As will be illustrated later, choosing $\epsilon<2/\lambda_{\max}(\mathbb{E}xx^T)$ guarantees convergence.

The distribution of the data is not a priori known, and hence as is common in machine leaning, e.g., in empirical risk minimization~\cite[Ch. 2]{shalev2014understanding}, we will attempt to minimize the empirical cost computed as an average over collected data. {Specifically we assume that at each iteration $k$ there are $N$ new data points 
of the form 
\begin{equation}\label{eq:data}
(x_i, y_i) \in \reals^n \times \reals, \qquad i =1, \ldots, N.
\end{equation} 
We assume each data pair is independent and identically distributed according to a distribution $\mu$.\footnote{This setup arises either when an agent in Figure~\ref{fig:architecture} collects N new independent samples at each iteration, or when it just maintains a large pool of samples and selects randomly $N$ from them at each iteration as frequently done in stochastic gradient descent practice.}} 
Then we form the empirical cost
\begin{equation}
	\hat{J}(w) = \frac{1}{2} \frac{1}{N} \sum_{i=1}^N (y_i - x_i^T w)^2
\end{equation}
{It may also be desirable to add regularization terms in the objective to avoid overfitting, but in this paper we restrict attention to the objective above.}
With this approximation, 
we follow a stochastic gradient vector 
\begin{equation}
w_{k+1} = w_k  - \epsilon g_k
\end{equation}
computed over the data as
\begin{equation}\label{eq:gradient_estimate}
 g_k= \nabla\hat{J}(w_k) = \frac{1}{N}\sum_{i=1}^N \left( x_i x_i^T w_k - x_i y_i\right)
\end{equation}
After this update the prediction error becomes
\begin{equation}
J(w_{k+1}) = \frac{1}{2} \mathbb{E}(y - x^T w_{k+1})^2
\end{equation}
where the expectation is with respect to the distribution $\mu$.

We note that since the $N$ data points are random, so is the constructed gradient direction $g_k$, the updated vector $w_{k+1}$, as well as the performance metric $J(w_{k+1})$. 
{To evaluate how good is this updated prediction error, we would like to measure
	\begin{equation}
	\mathbb{E} [J(w_{k+1}) | w_k] = 
	\mathbb{E}_{data \sim \mu^N} [J(w_{k+1}) | w_k]
	\end{equation} 
It is important to note that the expectation here is over the $N$ i.i.d. data that are collected at iteration $k$ and used to construct the stochastic gradient $g_k$ and update the iterate $w_{k+1}$. In the paper, whenever an expectation over iterates $w_k$ is taken, this is an expectation over all the data collected until time $k$.}

\begin{figure}[t!]
	\includegraphics[width=\columnwidth]{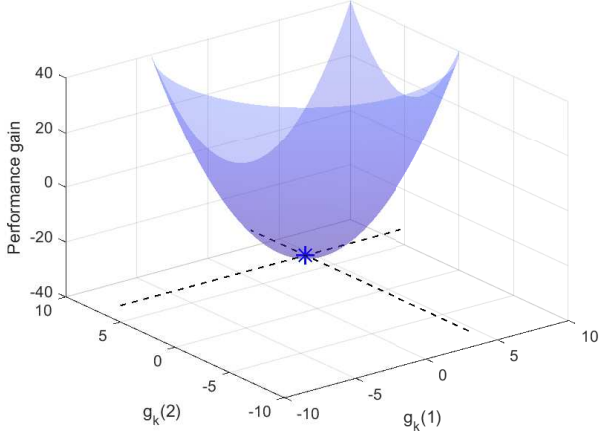}
	\caption{Example of the performance gain as a function of the stochastic gradient update in two dimensions ($n=2$). Depending on the informativeness of the data, the gain varies and is exploited in the proposed scheduling mechanism in Section~\ref{sec:approach}.}
	\label{fig:gain_example}
\end{figure}

\subsection{Scheduling problem for single task}

Given the above modeling for a machine learning task that needs to be solved, the scheduling problem is as follows: given the stochastic gradient $g_k$ available from the data at each iteration, select whether to transmit this gradient update over the communication network to a receiving server. The server maintains a current vector of weights $w_k$ which will be updated depending on the case to
\begin{equation}\label{eq:dynamics_single_task}
w_{k+1} = \left\{ \begin{array}{ll} w_k - \epsilon g_k &\text{if scheduled}\\
w_k &\text{if not scheduled} \end{array}\right.
\end{equation}
We also denote with $\alpha_k \in \{1,0\}$ the decision to schedule/transmit or not. {This decision is made at the node.}
{At the next iteration $k+1$ a new set of data is collected as in \ref{eq:data}, a new stochastic gradient direction $g_{k+1}$ is computed, and the process repeats. The aim will be to avoid sending updates all the time to limit the communication burden.}

\subsection{Scheduling problem for multiple tasks}

Given the above modeling for each of the machine learning tasks that need to be solved, the scheduling problem for multiple tasks is as follows. Given the available data across all tasks $j=1, \ldots, m$, select one task $j^*$ whose weights will be updated at the server changing values from $w_k^{j^*}$ to $w_{k+1}^{j^*}$, while all other weights remain the same at $w_k^j$ for all $j \neq j^*$. More succinctly, for any given scheduling rule, we denote the vector after the scheduling decision as 
\begin{equation}\label{eq:dynamics_multiple_tasks}
	w_{k+1}^j = \left\{ \begin{array}{ll} w_k^j - \epsilon g_k^j &\text{if $j$ scheduled}\\
	w_k^j &\text{if $j$ not scheduled} \end{array}\right.
\end{equation}
{
This means that in an architecture as shown in Fig~\ref{fig:architecture}, due to communication capacity constraints, at most one communication link is active at any time step, and hence at most one of the tasks can be updated over the network. The setup can be easily generalized to the case where $p$ out of $m$ tasks can be scheduled, for some $p<m$.
}

\section{Proposed scheduling approach}\label{sec:approach}

\subsection{Scheduling a single task}
{ The approach is based on the notion of performance gain which can be thought as a measure of how informative are the data collected at each time step with respect to the machine learning problem. The gain can be calculated as the difference
\begin{equation}
	J(w_k - \epsilon g_k) - J(w_k),
\end{equation}
measuring how much will the objective change after the update. Owing to the fact that the objective function is a quadratic function of the weights $w$, this gain can be thought as a quadratic function of the stochastic gradient $g_k$. The best gain (minimum) is a negative value and is achieved at some vector $g_k$ that updates the weights to the optimal solution of the problem. On the other hand, as $g_k$ grows larger in any direction, the quadratic form takes a larger positive value, meaning that actually no gain is achieved. The update direction in turn depends on the random data samples by \ref{eq:gradient_estimate}. As a result the gain can be large or small depending on the\textit{ informativeness of the data} and how well they point towards to the optimal solution. The gain also depends on the current vector of weights $w_k$. An illustration is shown in Fig.~\ref{fig:gain_example}.

The proposed approach then is to send a gradient update if the gain is large enough. Mathematically we write
\begin{equation}\label{eq:single_scheduling}
	\alpha_k = \left\{ \begin{array}{ll} 1 &\text{if } J(w_k - \epsilon g_k) - J(w_k) \leq -\lambda \\
	0 &\text{otherwise}\end{array}\right.
\end{equation}
for some scalar parameter $\lambda>0$.

Intuitively this approach saves up communication resources, because the larger the parameter value $\lambda$ is, the more infrequent the updates will be. But then the question is what can be said about the progress of learning. We have then the following result.

\begin{thm}[Convergence]\label{thm:theorem_single}
	Consider the optimization problem defined in \ref{eq:regression_problem}. Consider the update rule in \ref{eq:dynamics_single_task}. Suppose $g_k$ are independent random variables with mean equal to $\nabla J(w_k)$ at each iteration $k$ and covariance $G$. Consider the update rule in \ref{eq:single_scheduling}. Then for any iteration $N$ we have that 
	\begin{equation}\label{eq:single_statement}
		\mathbb{E}J(w_N) \leq \rho^N J(w_0) + 
		(1-\rho^N) \left[ J(w^*) + \frac{\lambda + \epsilon^2 \text{Tr}(\Sigma_x  G)}{1-\rho} \right]
	\end{equation}
	where the expectation is with respect to the data collected until iteration $N$, and the parameters  are $\Sigma_x = \mathbb{E}xx^T/2$ and $\rho = \max_i (1-\epsilon  \lambda_i(\mathbb{E}xx^T))^2$. 
\end{thm}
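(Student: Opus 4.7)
The plan is to establish a one-step recursion of the form $\mathbb{E}[J(w_{k+1})] \leq \rho\,\mathbb{E}[J(w_k)] + (1-\rho)J(w^*) + \lambda + \epsilon^2 \text{Tr}(\Sigma_x G)$, and then iterate it from $k=0$ to $k=N-1$; summing the geometric series $1+\rho+\cdots+\rho^{N-1} = (1-\rho^N)/(1-\rho)$ immediately produces the stated bound. So the work is concentrated in getting the one-step inequality.

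First I would rewrite the loss in a convenient quadratic form. Since $\nabla J(w^*)=0$ gives $\mathbb{E}xx^T w^* = \mathbb{E}xy$, expanding yields $J(w) - J(w^*) = \tfrac{1}{2}(w-w^*)^T A (w-w^*)$ where $A = \mathbb{E}xx^T$. This identity will let me handle expected quadratic forms in $w_{k+1}-w^*$ cleanly.

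The second step, which I view as the central observation, is to eliminate the case distinction between scheduled and non-scheduled iterations with a single inequality. Let $\tilde{w}_k := w_k - \epsilon g_k$ be the would-be update. When $\alpha_k = 1$, we have $J(w_{k+1}) = J(\tilde{w}_k) \leq J(\tilde{w}_k) + \lambda$ trivially. When $\alpha_k = 0$, the scheduling rule \ref{eq:single_scheduling} fails, meaning $J(\tilde{w}_k) - J(w_k) > -\lambda$, so $J(w_{k+1}) = J(w_k) < J(\tilde{w}_k) + \lambda$. Either way,
\begin{equation}
J(w_{k+1}) \leq J(\tilde{w}_k) + \lambda.
\end{equation}
This is the trick that makes the bound uniform and independent of how often scheduling occurs.

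Third, I would take a conditional expectation given $w_k$. Writing $\tilde{w}_k - w^* = (w_k - w^*) - \epsilon g_k$, with $\mathbb{E}[g_k\mid w_k] = \nabla J(w_k) = A(w_k - w^*)$ and conditional covariance $\epsilon^2 G$, the mean-variance decomposition $\mathbb{E}[z^T A z] = (\mathbb{E}z)^T A (\mathbb{E}z) + \text{Tr}(A\,\text{Cov}(z))$ gives
\begin{equation}
\mathbb{E}[J(\tilde{w}_k)\mid w_k] - J(w^*) = \tfrac{1}{2}(w_k-w^*)^T (I-\epsilon A)A(I-\epsilon A)(w_k-w^*) + \tfrac{\epsilon^2}{2}\text{Tr}(A G).
\end{equation}
Since $A$ is symmetric, $(I-\epsilon A)$ and $A$ are simultaneously diagonalizable, so $(I-\epsilon A)A(I-\epsilon A) \preceq \rho A$ with $\rho = \max_i (1-\epsilon\lambda_i(A))^2$. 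Using $\Sigma_x = A/2$ and the quadratic form for $J(w_k) - J(w^*)$, this reads
\begin{equation}
\mathbb{E}[J(\tilde{w}_k)\mid w_k] \leq J(w^*) + \rho(J(w_k) - J(w^*)) + \epsilon^2 \text{Tr}(\Sigma_x G).
\end{equation}

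Combining with the uniform bound from step two, taking total expectation and defining $V_k = \mathbb{E}J(w_k) - J(w^*)$, I obtain the scalar recursion $V_{k+1} \leq \rho V_k + \lambda + \epsilon^2\text{Tr}(\Sigma_x G)$, whose closed-form solution after $N$ iterations is exactly \ref{eq:single_statement}. The only substantive step is step two (the uniform $\lambda$-bound); steps one, three, and four are routine. A minor subtlety to check is that $\rho<1$ (which requires $\epsilon<2/\lambda_{\max}(A)$, noted already in the setup) so that the geometric sum is valid.
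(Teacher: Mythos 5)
Your proposal is correct and follows essentially the same route as the paper's proof: the uniform bound $J(w_{k+1}) \leq J(w_k - \epsilon g_k) + \lambda$ obtained by case analysis on $\alpha_k$, the quadratic/mean--variance decomposition with the matrix inequality $(I-\epsilon A)A(I-\epsilon A) \preceq \rho A$, and iteration of the resulting scalar recursion. Your handling of the $\epsilon^2$ factor in the covariance term is in fact slightly more careful than the paper's intermediate display, which omits it before reinstating it in the final bound.
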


\begin{proof}
	First we note that due to the choice in \ref{eq:single_scheduling} the following inequality holds for all times (technically it holds almost surely as all the variables involved are random variables)
	\begin{equation}
		\alpha_k J(w_k - \epsilon g_k) + (1-\alpha_k) J(w_k) \leq \lambda + J(w_k - \epsilon g_k).
	\end{equation}
	This can be easily verified by examining the two cases $\alpha_k =0$ or $1$.
	Then note that by the dynamics in \ref{eq:dynamics_single_task} we can rewrite the left hand side as
	\begin{equation}
	J(w_{k+1}) \leq \lambda + J(w_k - \epsilon g_k).
	\end{equation}
	Taking expectation over the stochastic gradient $g_k$, conditioned on the current iterate $w_k$, we get that 
	\begin{equation}\label{eq:intermedient}
		\mathbb{E}[ 
		J(w_{k+1}) \given w_k] \leq \lambda + \mathbb{E}[ J(w_k - \epsilon g_k)\given w_k]
	\end{equation}
	
	Then given the fact that the function $J(w)$ is quadratic and can be rewritten as $J(w) = J(w^*) + (w-w^*)^T \Sigma_x (w-w^*) $, and the properties of the stochastic gradient, we have that 
	\begin{align}
		&\mathbb{E}[ J(w_k - \epsilon g_k)\given w_k] = J(w^*)+ \notag\\
		&(w_k - \epsilon \mathbb{E} g_k-w^*)^T \Sigma_x (w_k - \epsilon \mathbb{E} g_k-w^*) + \text{Tr}(\Sigma_x G) 
	\end{align}
	Here we can substitute $\mathbb{E} g_k = \nabla J(w_k) = 2 \Sigma_x (w_k-w^*)$, and the fact that $(I-\epsilon2\Sigma_x)'\Sigma_x (I-\epsilon2\Sigma_x)\preceq \rho \Sigma_x$ to conclude that
	\begin{align}
\mathbb{E}[ J(w_k - \epsilon g_k)&\given w_k] \leq J(w^*)+ \notag\\
	&\rho (w_k - w^*)^T \Sigma_x (w_k -w^*) + \text{Tr}(\Sigma_x G) 
	\end{align}
	Substituting this in \ref{eq:intermedient} and rearranging terms we find that 
	\begin{align}
	\mathbb{E}[ 
	J(w_{k+1}) \given w_k] &\leq \lambda + \rho J(w_k) \notag\\
	&+ \epsilon^2 \text{Tr}(\Sigma_x  G) + (1-\rho)J(w^*)
	\end{align}
	Taking expectation on both sides with respect to the variable $w_k$, and iterating over time $k=1, \ldots, N$, we get the desired result \ref{eq:single_statement}.
\end{proof}

The result verifies that the update rule converges (in a stochastic sense) because $\rho<1$ as can be confirmed by the appropriate choice of the stepsize $0<\epsilon<2/\lambda_{\max}(\mathbb{E}xx^T)$. Essentially the result follows because the function $J(w)$ can be thought as a Lyapunov function for the stochastic dynamics of the update in \ref{eq:dynamics_single_task} and showing the convergence result. A direct consequence of the above result is 
\begin{equation}
\limsup_{N \rightarrow \infty} \, \mathbb{E}J(w_N) \leq  J(w^*) + \frac{\lambda + \epsilon^2 \text{Tr}(\Sigma_x  G)}{1-\rho} 
\end{equation}
This means that eventually we get  close to the optimal set of weights $w^*$ subject to some overshoots. There is the effect of the stochastic gradient and its covariance $G$, which can be made small in practice by choosing the step size $\epsilon$ to be small -- or by choosing a diminishing stepsize which will be analyzed in future work. Moreover, there is a penalty proportional to the parameter $\lambda$, introduced to save up on communication cost.

\begin{remark}
	In Theorem~\ref{thm:theorem_single} we assumed for simplicity that the stochastic gradients have bounded covariances that are constant over time. In reality for the problem above the covariance of the stochastic gradient in \ref{eq:gradient_estimate} will depend on the current iterate $w_k$, but our choice can be justified in two ways. We can either consider these covariances to be uniformly bounded over time by some constant $G$. Or alternative if we consider the case close enough to the equilibrium $w_k \approx w^*$, then it follows that the covariances are indeed constant over time. A more detailed investigation will be explored in a follow up work.
\end{remark}

{ Furthermore, we can establish the following guarantee about the total communication rate of the proposed approach. 
	
\begin{thm}[Communication guarantee]
	Consider the same setup as in Theorem~\ref{thm:theorem_single}.
		The total expected communication rate satisfies
		\begin{equation}\label{eq:communication_guarantee}
		\limsup_{N \rightarrow \infty} \sum_{k=0}^N \mathbb{E} \alpha_k 
		\leq\frac{ J(w_0) - J(w^*)}{\lambda }.
		\end{equation}
		Here the expectation is with respect to the data collected as iterations $N\rightarrow \infty$.
\end{thm}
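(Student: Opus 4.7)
The plan is to use the objective $J$ itself as a potential/Lyapunov function and perform a telescoping sum, exploiting the fact that the scheduling rule \ref{eq:single_scheduling} is defined precisely so that any transmission contributes at least $\lambda$ of progress on $J$. In contrast to Theorem~\ref{thm:theorem_single}, where stochastic-gradient noise had to be absorbed via the covariance term, here no expectation over $g_k$ is needed at the per-step level: the decrement bound holds almost surely by construction.

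Concretely, the first step would be to establish the pointwise inequality
\begin{equation}
J(w_{k+1}) - J(w_k) \leq -\lambda\, \alpha_k
\end{equation}
for every $k$. This follows by splitting on the scheduling decision: if $\alpha_k = 1$, then by the definition of the rule \ref{eq:single_scheduling} we have $J(w_k - \epsilon g_k) - J(w_k) \leq -\lambda$, and the dynamics \ref{eq:dynamics_single_task} give $w_{k+1} = w_k - \epsilon g_k$, so the drop is at least $\lambda$; if $\alpha_k = 0$, then $w_{k+1} = w_k$ so the difference is zero, and the right-hand side is also zero. Both cases are compatible with the stated inequality.

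The second step is to telescope the inequality over $k = 0, 1, \ldots, N-1$, yielding
\begin{equation}
J(w_N) - J(w_0) \leq -\lambda \sum_{k=0}^{N-1} \alpha_k,
\end{equation}
which still holds almost surely. Taking expectation with respect to all data collected up to iteration $N$ and rearranging gives
\begin{equation}
\lambda \sum_{k=0}^{N-1} \mathbb{E}\alpha_k \leq J(w_0) - \mathbb{E} J(w_N).
\end{equation}
Then I would invoke $\mathbb{E} J(w_N) \geq J(w^*)$, which is immediate because $w^*$ is the global minimizer of $J$, to upper bound the right-hand side by $J(w_0) - J(w^*)$. Dividing by $\lambda$ and taking $\limsup_{N\to\infty}$ yields \ref{eq:communication_guarantee}.

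There is no real technical obstacle here: the whole argument is essentially a bookkeeping identity once one recognises that the threshold $\lambda$ in the scheduling rule directly gives a per-transmission descent guarantee. The one subtlety worth flagging is that the bound on the right-hand side is \emph{finite}, independent of $N$, which at first sight may seem surprising; this is simply saying that in expectation only finitely many transmissions are ever required, which is consistent with the intuition that once the iterate lingers near $w^*$, the randomness in $g_k$ will only rarely produce an update with gain exceeding $\lambda$, so $\alpha_k$ is eventually almost always zero.
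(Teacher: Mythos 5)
Your proof is correct and is essentially identical to the paper's own argument: the same almost-sure per-step inequality $\lambda\alpha_k + J(w_{k+1}) \leq J(w_k)$ obtained by case analysis on $\alpha_k$, followed by telescoping, taking expectations, and bounding $\mathbb{E}J(w_N) \geq J(w^*)$. The only differences are cosmetic (index range of the sum).
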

}
\begin{proof}
	
	{ 
		Due to the choice in \ref{eq:single_scheduling} the following inequality holds for all times (technically it holds almost surely as all the variables involved are random variables)
		\begin{equation}
		\lambda \alpha_k + 
		J(w_{k+1}) \leq J(w_k).
		\end{equation}
		This can be easily verified by examining the two cases $\alpha_k =0$ or $1$. Taking expectation, iterating over time $k=0, \ldots, N$, and summing up, we conclude that
		\begin{equation}
		\lambda \sum_{k=0}^N \mathbb{E} \alpha_k + 
		\mathbb{E}J(w_{N+1}) \leq J(w_0).
		\end{equation}
		This can be rearranged as
		\begin{equation}\label{eq:communication_guarantee_1}
		\sum_{k=0}^N \mathbb{E} \alpha_k 
		\leq\frac{ J(w_0) - \mathbb{E}J(w_{N+1})}{\lambda }
		\end{equation}
		Moreover, since any value of the variable $w_{N+1}$ is in general suboptimal, we have that $\mathbb{E}J(w_{N+1})\geq J(w^*)$. From which we get the desired result \ref{eq:communication_guarantee}. 
	}
\end{proof}

This verifies that increasing $\lambda$ will decrease the resulting communication rounds in an inversely proportional manner. Other schemes are also possible, such as decreasing $\lambda$ at each round, and will be considered in subsequent work.

\subsection{Practical scheduling scheme}

Despite the above guarantee, implementing the proposed scheduling scheme in \ref{eq:single_scheduling} would be practically challenging because it requires information that is not known. Specifically it would require knowledge of the data distribution in order to compute the actual performance gain. Since the true distribution is unknown, one approach is to \textit{estimate the performance gain from the data}. In particular, since the objective function is quadratic, we can write the performance gain as
\begin{align}\label{eq:gain_1}
J(w_{k}- \epsilon g_k) - J(w_k) = &-\epsilon g_k^T \nabla J(w_k) \notag\\
&+ \frac{1}{2} \epsilon^2 g_k^T \nabla^2 J(w_k) g_k
\end{align}
Then we can approximate the quantities
\begin{align}
&\nabla J(w_k) \approx \frac{1}{N}\sum_{i=1}^N \left( x_i x_i^T w_k - x_i y_i\right) =g_k \\
&\nabla^2 J(w_k) \approx \frac{1}{N}\sum_{i=1}^N x_i x_i^T
\end{align}
where we note that the stochastic gradient direction $g_k$ appears again. Hence, using the expression for the information gain in \ref{eq:gain_1}, we can approximate the gain as
%
\begin{align}\label{eq:approximate_gain_1}
J(w_{k}- \epsilon g_k) - J(w_k) \approx -\epsilon g_k^T\left[ I - \epsilon \frac{1}{2}  \frac{1}{N}\sum_{i=1}^N x_i x_i^T \right] g_k
\end{align}
It is crucial to emphasize that \textit{this is no longer a simple quadratic function} of the data but a more complicated function - we note that the data appear both in the stochastic gradients $g_k$ by \ref{eq:gradient_estimate} as well as the matrix in the middle. An example is presented next. This approximate value of the gain may take again positive or negative values but it induces an approximation error/bias.

As a result, we can implement the scheduling decision in \ref{eq:single_scheduling} with the approximation in \ref{eq:approximate_gain_1}. In this case we no longer have the performance guarantee in Theorem~\ref{thm:theorem_single}. In numerical evaluations however we see that despite the bias this mechanism performs very well.

\begin{example}
	Consider as an illustration the case where a task has only one $N=1$ sample $(x_1, y_1)$ and the initial set of weights is zero $w_0 = 0$. Then the stochastic gradient step by \ref{eq:gradient_estimate} is $g_k = - x_1 y_1$ and the estimated gain by \ref{eq:approximate_gain_1} is 
	\begin{align}
	J(w_1) - J(0) \approx -\epsilon y_1^2 \|x_1\|^2 ( 1 - \frac{\epsilon}{2} \|x_1\|^2)
	\end{align}
	This estimated gain is negative for small values of the norm $\|x_1\|$ and attains its minimum value at the point $\|x_1\|=1/\sqrt{\epsilon}$. The estimated gain increases and becomes positive as $\|x_1\|$ increases. This means that our scheduling mechanism in \ref{eq:single_scheduling} would avoid scheduling the updates in that case. The expression becomes more complex for larger number of samples.
\end{example}

}

\subsection{Scheduling multiple tasks}

{
 We have $m$ tasks, with objectives $J^1(w^1), \ldots ,J^m(w^m)$ and we are interested in selecting to update one of the vectors $w^j$. We propose a greedy scheduling algorithm: \textit{select the update that will bring the best gain}. Using the notation in the above section we select 
\begin{equation}\label{eq:scheduling}
	j^* = \argmin_{j=1, \ldots,m} J^j(w^j_k- \epsilon^j g_k^j) - J^j(w^j_k)
\end{equation} 
where the gain can be approximated from the data for each task as explained earlier in \ref{eq:approximate_gain_1}.  This can exploit the fact that not all of the updates are equally informative due to the randomness of the data collected for each task.
}

The theoretical guarantees for this scheduling approach will be further developed in future work. The practical benefits will be shown numerically in the next section.

\begin{remark}[Practical Implementation]
	{The scheduling decision in \ref{eq:scheduling} is made in a centralized manner. This can be made for example at the server end of the architecture in Fig.~\ref{fig:architecture}, deciding which node to upload information, or at a node if it needs to send updates to many different servers. The proposed approach requires having access to all the available data when making the scheduling decision, and compare the informativeness of the data per task in \ref{eq:scheduling}. This is feasible in the second scenario at the node, but is more challenging to implement in practice in the first scenario at the server end. One potential approach, also proposed in \cite{amiri2020update,chen2020convergence}, is to perform this in two stages: first have each node estimate the performance gains locally from the data, and only send these scalar gains at the server (not the data or the gradients). Then the server can compare the gains among them and decide which gradient vector to receive. Other approaches, such as estimating the performance gain at the server end, will be examined in future work.}
\end{remark}

\begin{remark}[Computational cost]
	We note here that the computational cost for making the scheduling decision in \ref{eq:scheduling} is small. First, for each task, the vector $g_k$ given by \ref{eq:gradient_estimate} can be computed by first computing the residual terms $x_i^T w_k - y_i$ and then multiplying them with the vectors $x_i$ and summing up for $i=1,\ldots, N$. Then, for each task, the  term by \ref{eq:approximate_gain_1}  can be computed as
	\begin{equation}
		  \epsilon\frac{1}{2}  \frac{1}{N}\sum_{i=1}^N (g_k^T x_i)^2 -g_k^T g_k
	\end{equation}
	Overall across tasks these require $O(Nnm)$ operations. Finally finding the minimum across $m$ in \ref{eq:scheduling_gradient} is not costly for a small number of tasks $m$.
\end{remark}

\subsection{Other approaches in the literature: Gradient-based scheduling}

We note that a different scheduling approach would be to treat the tasks with the largest updates as the most important. In that case, a simple approach would be to schedule the task with the largest norm of the (stochastic) gradient, i.e., 
\begin{equation}\label{eq:scheduling_gradient}
j^* = \argmax_{j=1, \ldots,m} \|g_k^j\|^2
\end{equation} 
In numerical comparisons we show that this scheme often leads to worse performance. From our expression on \ref{eq:approximate_gain_1} we see that for small stepsizes $\epsilon$ the magnitude of the gradient may serve as a proxy for the performance gain. This may also be the case when the Hessian of the problem is closer to an identity matrix.

We note that the idea of scheduling based on gradient magnitudes has been proposed in very recent works in federated learning over wireless channels~\cite{amiri2020update, chen2020convergence}. Similarly, in the context of sparsification and quantization for high-dimensional gradient updates, it has been proposed that elements with larger values should be given priority~\cite{aji2017sparse, sattler2019sparse}. Our findings hence point to a \textit{novel and more communication- efficient approach for gradient updates}.

\section{Numerical results}\label{sec:numerical}

In this section we analyze and compare numerically the proposed approach. In this section we make an additional assumption about the data samples. Specifically we assume that the points $x_i$ are i.i.d. Gaussian random variables, while the points $y_i$ are given as $y_i = x_i^T w^* + \eta_i$ where $w^*$ is the true parameter and $\eta_i$ are i.i.d. Gaussian measurement noises.

\subsection{Tradeoff between communication rate and learning performance}

\begin{figure}
	\includegraphics[width=\columnwidth]{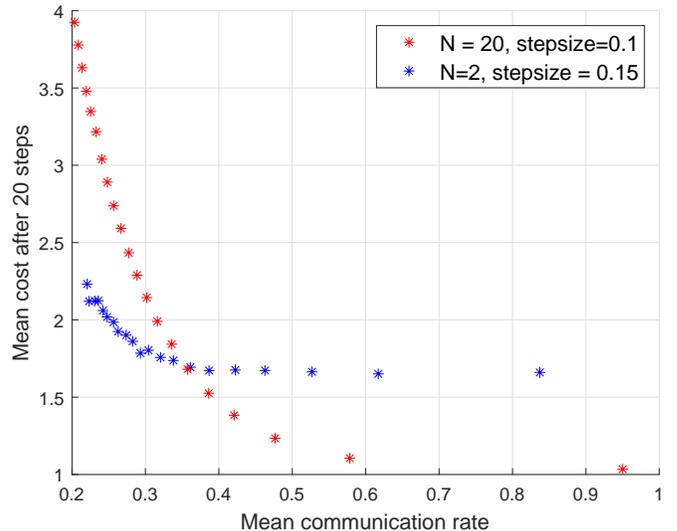}
	\caption{Evaluation of the tradeoff between communication rate and machine learning performance of the proposed scheduling approach in \ref{eq:single_scheduling}. The curves are obtained by varying the values of the parameter $\lambda$ for different settings of the optimization problem.}
	\label{fig:tradeoff}
\end{figure}

{We consider the scheduling algorithm in \ref{eq:single_scheduling} with the performance gains estimated as in \ref{eq:approximate_gain_1}. We consider a problem with dimensions $n=2$, with covariances $\mathbb{E} x x^T = \left[ \begin{array}{cc} 3 &0\\ 0 &1 \end{array}\right]$ (which affects the Hessian of the problem), the initial weights are $w_0=0$, and the true weights equal to $w^* = \left[ \begin{array}{c} 3 \\ 5 \end{array}\right]$. First for stepsize $\epsilon = 0.1$ and $N=20$ data points available at each iteration (cf.\ref{eq:data}), we simulate algorithm \ref{eq:single_scheduling} for varying values of the parameter $\lambda$. In Fig.~\ref{fig:tradeoff} we plot the observed mean learning performance after $20$ iterations ($J(w_{20})$) versus the average communication rate ($1/20 \sum_{k=0}^{19} \alpha_k$). We observe that the proposed scheduling approach indeed allows us to tradeoff communication rate with machine learning performance.
	
Then for stepsize $\epsilon = 0.15$ and $N=2$ data points available at each iteration, we simulate the algorithm again for varying value of the parameter $\lambda$ and plot the results in Fig.~\ref{fig:tradeoff}. This case corresponds to both a very noisy gradient setting due to the very small number of data at each iteration and also a larger stepsize. We observe now qualitatively a different tradeoff curve. In this case we can remarkably lower the communication rate without a significant loss in learning performance. This indicates that our proposed scheduling approach may achieve better communication efficiency when data are more noisy.

}

\subsection{Bias of data-based estimated performance gains}

\begin{figure}
	\includegraphics[width=\columnwidth]{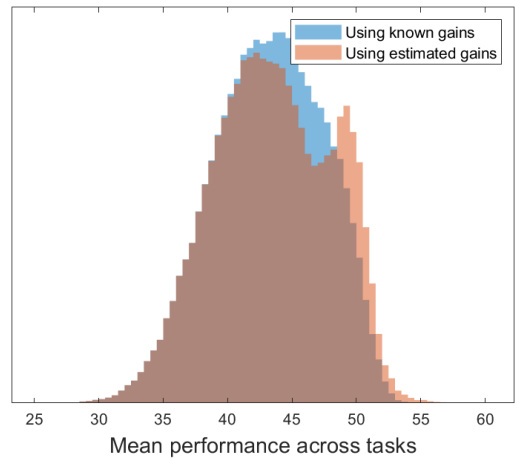}
	\caption{Comparison between our scheduling approach in \ref{eq:scheduling} knowing the distributions to compute the gains by \ref{eq:gain_1} versus estimating the gain by \ref{eq:approximate_gain_1}. We observe that the lack of model knowledge is counteracted from available data.}
	\label{fig:known_vs_estimated}
\end{figure}

Next we consider the problem of scheduling multiple $m$ tasks, using the rule in \ref{eq:scheduling}. We would like to investigate how much bias is introduced by our practical scheme that is based on estimating the performance gain for each task update based on the currently available data. Hence we compare \ref{eq:scheduling} when using the performance gains computed by \ref{eq:gain_1} that requires knowledge of the problem distributions, with the fully data-based scheme in \ref{eq:approximate_gain_1}. 

In particular we consider scheduling $m=2$ linear regression tasks with randomly chosen parameters. While all problem parameters are fixed, we consider a single iteration of the problem, and we draw many data sample batches of size $N$ and simulate the two scheduling approaches. In Fig.~\ref{fig:known_vs_estimated} we plot the histogram of the mean performance across tasks, computed as $\frac{J^1(w_{k+1}^1) + J^2(w_{k+1}^2)}{2} $ for the two schemes. In our numerical evaluations, we surprisingly do not observe a significant loss due to the estimation procedure. This was consistently observed across different instances, reinforcing the usefulness of our scheme. In particular we observe that the difference between the two schemes is larger when the coordinates of $x$ are more dependent. Our intuition is that in that case the covariances $\mathbb{E} x x^T$ are more different than identity matrices and it is harder to estimate the performance gains  using data. The spike that appears in Fig.~\ref{fig:known_vs_estimated} is because of the multi-modal nature of the problem; sometimes the data-based approach has errors which lead to schedule tasks that lead to larger costs.

\subsection{Comparison with gradient-based scheduling}

\begin{figure}
	\includegraphics[width=\columnwidth]{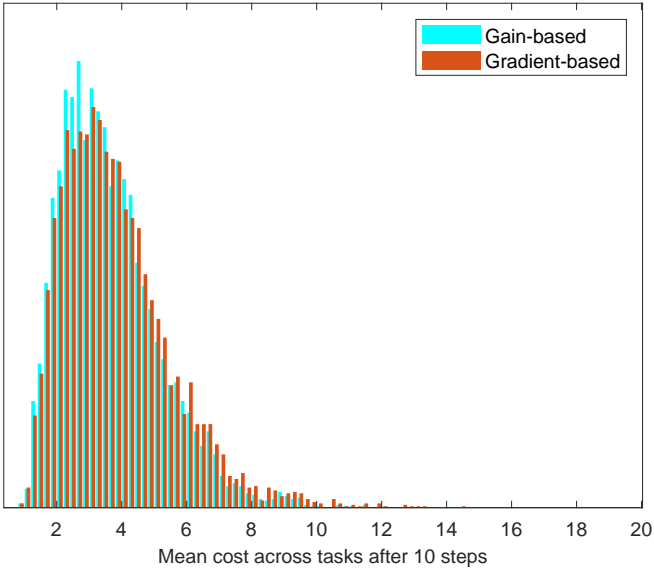}
	\caption{Comparison between our scheduling approach based on estimating the gain in \ref{eq:approximate_gain_1} versus the approach in \ref{eq:scheduling_gradient} based on the magnitude of the gradient.}
	\label{fig:ours_vs_gradient01}
\end{figure}

\begin{figure}
	\includegraphics[width=\columnwidth]{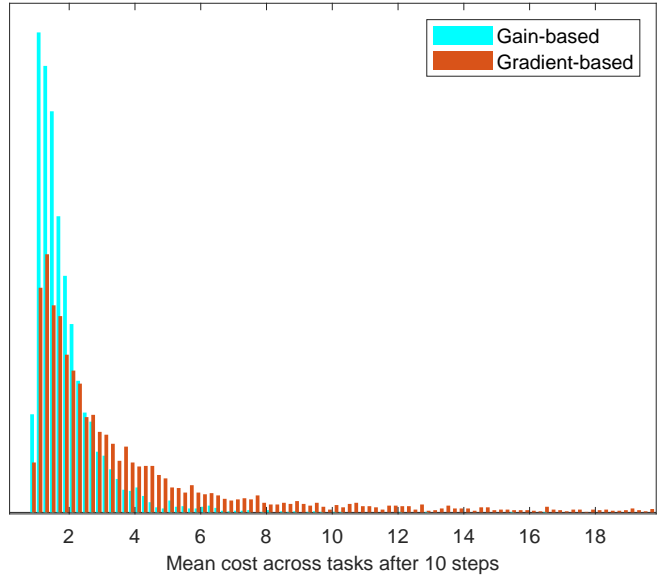}
	\caption{Comparison between our scheduling approach based on estimating the gain in \ref{eq:approximate_gain_1} versus the approach in \ref{eq:scheduling_gradient} based on the magnitude of the gradient. The gradient-based approach leads to considerably worse performance when the stepsize is large.}
	\label{fig:ours_vs_gradient02}
\end{figure}

We finally compare our scheduling scheme \ref{eq:scheduling} based on estimating the performance gain across tasks in \ref{eq:approximate_gain_1} with the simple scheduling strategy based on the magnitude of the gradients for each task \ref{eq:scheduling_gradient}. 
To illustrate how these approaches can lead to very different results we consider a specific setup of scheduling $m=2$ identical tasks of dimensions $n=2$, with covariances $\mathbb{E} x x^T = \left[ \begin{array}{cc} 3 &0\\ 0 &1 \end{array}\right]$ (which affects the Hessian of the problem), the initial weights are $w_0=0$, and the true weights equal to $w^* = \left[ \begin{array}{c} 3 \\ 5 \end{array}\right]$. We assume $N=5$ data points are available at each iteration per task. First for stepsize $\epsilon = 0.1$ the comparisons are shown in Fig.~\ref{fig:ours_vs_gradient01} again as histograms of the mean cost across the two tasks, where we observe very close performance among both approaches - even though the proposed gain-based one is better.
Both approaches of course have the same communication burden - only one of the two tasks is selected to communicate/update at each step.
Then for a slightly larger stepsize $\epsilon = 0.2$ the comparisons are shown in Fig.~\ref{fig:ours_vs_gradient02}.
We observe that our approach performs significantly better than the gradient-based one. The improvements get even more significant as the setpsize increases. Our conclusion is that \textit{the magnitude of the gradient is not a reliable measure for the informativeness of the data}. Our approach which is based on the more complex estimate of performance gain provides a more reliable and communication-efficient approach.

\section{Concluding remarks}

In this paper we examine the problem of solving multiple machine learning tasks concurrently over a network. We consider the problem of selecting which updates to be scheduled to meet capacity constraints. To exploit the informativeness of the data we examine the notion of performance gain and we illustrate numerically how this can be approximated from the data without further model knowledge. The approach is contrasted to other related works in the area of communication-efficient learning. 

A limitation of our approach is that scheduling must be performed centrally with all data points available. In practical scenarios, such as in federated learning, the data are distributed across different agents and hence alternative approaches will be explored in future work.
Ongoing work explores the use of the approach in more complex networks of learning agents, as well as other machine learning tasks beyond linear regression.

\bibliographystyle{ieeetr}
\bibliography{federated_learning}

\end{document}